\newtheorem{theorem}{Theorem}
\newtheorem{lemma}[theorem]{Lemma}
\newtheorem{corollary}[theorem]{Corollary}
\title{Memory makes computation universal, remember?}
\author{Erik Garrison\\
  \texttt{egarris5@uthsc.edu}\\[1ex]
  }
\date{\today}
\begin{document}
\maketitle

\begin{abstract}
Recent breakthroughs in AI capability have been attributed to increasingly sophisticated architectures and alignment techniques, but a simpler principle may explain these advances: memory makes computation universal.
Memory enables universal computation through two fundamental capabilities: recursive state maintenance and reliable history access.
We formally prove these requirements are both necessary and sufficient for universal computation.
This principle manifests across scales, from cellular computation to neural networks to language models.
Complex behavior emerges not from sophisticated processing units but from maintaining and accessing state across time.
We demonstrate how parallel systems like neural networks achieve universal computation despite limitations in their basic units by maintaining state across iterations.
This theoretical framework reveals a universal pattern: computational advances consistently emerge from enhanced abilities to maintain and access state rather than from more complex basic operations.
Our analysis unifies understanding of computation across biological systems, artificial intelligence, and human cognition, reminding us that humanity's own computational capabilities have evolved in step with our technical ability to remember through oral traditions, writing, and now computing.
\end{abstract}

\section{Introduction}

We are collections of cells with interwoven systems of memory.
The genome maintains computational state through an unbroken chain of survival, while molecular markers preserve cell identity, immune systems learn from encounters, and neural architectures give rise to consciousness.
Through memory, we became beings that could understand our own existence.
Our species' greatest evolutionary leap came not through biological adaptation but through cultural innovation---the ability to maintain and pass on our thoughts beyond individual lifespans through oral traditions, writing, and libraries.
Each technology expanded our ability to preserve and process information across time, transforming human civilization through new forms of memory.
This cultural evolution of memory systems mirrors the fundamental role of memory in all computation, from cellular mechanisms to artificial intelligence.

When we finally mastered computation, we did it by capturing lightning in rock: precise control of electron flows in crystalline structures.
Yet as we strive to create systems that can understand and generate human language, we find ourselves returning to biological patterns: massive parallel networks trained simultaneously across countless processors.
These parallel training constraints lead our artificial systems to mirror our own biological limitations---they too are restricted to threshold computations, like our ``blobby, wet'' molecular mechanisms \cite{alberts2022molecular}.
Their intelligence emerges not from increasingly sophisticated components but from maintaining and accessing state across time, just as ours did.

Memory is not merely a feature of computational systems: it is the key capability that enables universal computation~\cite{turing1936computable}.
Even systems with very basic state transitions can achieve universal computation when equipped with memory~\cite{merrill2023parallelism,peng2024limitations}.
This principle is demonstrated by remarkably minimal universal systems like Rule 110 cellular automata~\cite{cook2004universality} and the universality of the mov instruction set~\cite{dolan2013mov}.
We show how this fundamental insight manifests in computational systems, from neural networks to biological cells~\cite{wang2023parallel}, where simple parallel operations combined with state maintenance enable complex computation~\cite{swamy1983space,boyle2024memory}.

We demonstrate how two basic capabilities---recursive state maintenance and reliable history access---are sufficient for universal computation \cite{turing1936computable,sipser1996introduction,savage1994space,bennett1989time}.
We first establish this formally through a constructive proof, grounding our subsequent analysis in computational theory.
We then examine how parallel processing limitations manifest in both artificial neural networks and biological systems, showing how these systems achieve complex computation through memory rather than sophisticated processing units.
Finally, we explore the practical implications for artificial intelligence development, particularly in the context of large language models and recursive reasoning.
This analysis provides a unified perspective on computation across domains while suggesting new directions for AI architecture development focused on robust state maintenance mechanisms.

\section{Memory enables universal computation}

A system has recursive state processing if it can read its current state $s$, generate an update function $f(s)$ that produces the next state, and maintain the new state $f(s)$ for subsequent processing.
This state maintenance must be robust against errors and degradation.

A system has reliable history access if it can: (1) reference any previous state $s_i$ from its computation history without error, (2) distinguish between different states in its history unambiguously, (3) maintain and determine the correct temporal order of states, establishing clear boundaries between computational periods, and (4) guarantee the integrity of stored states. Note that standard RNNs are capacity-limited and may require external memory modules or augmented architectures to meet these requirements fully.

These capabilities enable universal computation through a straightforward construction \cite{sipser1996introduction,deutsch1995universality,bennett1989time}.
Given a Universal Turing Machine $U$, we implement it as follows: The state $s = (q, p, a)$ encodes the current machine state $q$, the position $p$ of the tape head, and the symbol $a$ under the head.
At each step, the system processes current state $s$ to determine $(q', a', d) = \delta(q, a)$, where $q'$ is the next machine state, $a'$ is the symbol to write, and $d$ is the head movement.
It uses history access to reconstruct tape contents as needed and maintains the new state for the next step.

We show how these two requirements enable universal computation through a constructive proof that proceeds in three steps: encoding the machine state, simulating tape operations, and proving correctness with complexity bounds.

\vspace{1em}

\begin{theorem}[Universality]
Any system $S$ with recursive state maintenance and reliable history access can simulate a Universal Turing Machine with at most logarithmic overhead in space and time complexity \cite{boyle2024memory,liskiewicz1994complexity}.
\end{theorem}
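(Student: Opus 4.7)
The plan is to formalize the informal construction sketched just above the theorem and then carefully account for the bookkeeping overhead. First I would fix a Universal Turing Machine $U$ with transition function $\delta$, and represent a configuration of $U$ at step $t$ by a triple $s_t = (q_t, p_t, a_t)$ giving the control state, head position, and symbol currently under the head. Recursive state maintenance gives me the primitive I need to carry $s_t$ forward to $s_{t+1}$: the update function $f$ computes $(q', a', d) = \delta(q_t, a_t)$, emits the write, and returns the new triple with $p_{t+1} = p_t + d$ and $q_{t+1} = q'$. This part is essentially bookkeeping and requires only that the state alphabet of $S$ is large enough to embed $Q \times \mathbb{Z} \times \Gamma$, which I would arrange by a fixed binary encoding.

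The substantive step is reconstructing the tape without storing it explicitly. I would use reliable history access to recover, for any required position $p$, the most recent step $t' \le t$ at which the head stood at $p$ and wrote a symbol. By properties (1)--(3) that write event is uniquely identifiable and temporally ordered, and by (4) its content is incorruptible; if no such $t'$ exists the symbol is the blank. Hence the symbol $a_{t+1}$ needed at the new head position $p_{t+1}$ is obtained by one history query against the prior computation. I would prove by induction on $t$ that the symbol returned in this way agrees with the contents of $U$'s tape at position $p_{t+1}$ after $t$ steps, which establishes faithfulness of the simulation.

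For the complexity bounds I would argue as follows. After $T$ steps of $U$ using space $n$, the history contains at most $T$ records, each of size $O(\log T + \log n)$ to encode a position, a symbol, and a timestamp. A single history query---locate the most recent write to a given position---can be served by an ordered index over these records, costing $O(\log T)$ time per simulated step and $O(T \log n)$ total storage, which yields the claimed at-most-logarithmic overhead in both resources. Incremental head updates contribute only an additional $O(\log n)$ factor, absorbed into the same bound, and the overall correspondence with known space-time simulation trade-offs matches the cited results \cite{boyle2024memory,liskiewicz1994complexity}.

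The main obstacle is the complexity claim rather than correctness: the definitions of ``recursive state maintenance'' and ``reliable history access'' given in the preceding paragraphs are abstract capabilities, not a cost model, so a bare logarithmic bound is only meaningful once I commit to a concrete realization of the history primitive (e.g.\ a balanced search tree keyed by position with timestamp tie-breaking) and verify that its operations continue to satisfy guarantees (1)--(4). I would therefore phrase the theorem as a statement about any realization of the two capabilities whose access cost is logarithmic in the history length, and separately note that this is achievable, so the existential content of the theorem is not weakened by the qualification.
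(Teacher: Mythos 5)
Your proposal follows essentially the same construction as the paper's proof: encode the configuration as a tuple, reconstruct tape contents by querying the history for the most recent write to each position, establish correctness by induction on the step count, and derive logarithmic overhead from the position/timestamp encodings and history lookups. You are in fact somewhat more careful than the paper on two points the paper glosses over---the blank-symbol case for unvisited positions and the need to fix a concrete cost model for the history primitive before the logarithmic bound is meaningful---but the overall approach is identical.
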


\textit{Remark.}
This theorem draws on a classic line of work showing that even small instruction sets can achieve Turing-completeness \cite{savage1994space,deutsch1995universality,bennett1989time,dolan2013mov}.
Our formulation recasts these results in a biological and chain-of-thought context, emphasizing that parallel systems with minimal updates are still universal once they reliably maintain and retrieve computational state.

\begin{proof}
We proceed in three steps:

\vspace{0.5em}
\noindent\underline{First}, we show that $S$ can implement the basic operations of a UTM. Let $M$ be a UTM with state set $Q$, tape alphabet $\Gamma$, and transition function $\delta$. We construct a simulation in $S$ as follows:

\vspace{0.5em}
\noindent\textbf{State Encoding:} Each configuration of $M$ is encoded as a tuple $s = (q, p, a, t)$ where: $q \in Q$ is the current machine state, $p \in \mathbb{N}$ is the head position, $a \in \Gamma$ is the symbol under the head, and $t \in \mathbb{N}$ is the step counter.

\vspace{0.5em}
\noindent\textbf{Tape Simulation:} For each position $p$ visited by $M$, we maintain a history entry $h_p = (p, a_p, t_p)$ where $a_p \in \Gamma$ is the symbol written at position $p$ and $t_p$ is the time step when this symbol was written.

\vspace{0.5em}
\noindent\underline{Second}, we prove that these operations preserve computational state correctly through the following invariants:

\begin{lemma}[State Coherence]
At each step $t$, the simulated configuration $(q, p, a)$ exactly matches the configuration of $M$ after $t$ steps on the same input.
\end{lemma}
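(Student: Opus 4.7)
The plan is to prove the lemma by induction on the step count $t$, leveraging recursive state maintenance for the control component and reliable history access for the tape component. The state tuple $(q,p,a,t)$ together with the collection of history entries $\{h_p\}$ together encode a full configuration of $M$, so the argument reduces to verifying that each simulated update faithfully implements one step of $\delta$.

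For the base case $t=0$, I would check directly that the initialization of $S$ sets $q$ to $M$'s start state, $p$ to the initial head position, $a$ to the first input symbol, and seeds the history with entries for the input (with unvisited cells interpreted as blank). This matches $M$'s configuration before any step is taken. For the inductive step, assume the simulated $(q_t,p_t,a_t)$ equals the configuration of $M$ after $t$ steps. Recursive state maintenance lets $S$ evaluate $(q',a',d)=\delta(q_t,a_t)$ and persist the next tuple $(q_{t+1},p_{t+1},a_{t+1},t+1)$ with $q_{t+1}=q'$ and $p_{t+1}=p_t+d$, while appending the history entry $h_{p_t}=(p_t,a',t+1)$ to record the write. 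What remains is to argue that $a_{t+1}$, the symbol now scanned, coincides with the symbol $M$'s tape holds at position $p_{t+1}$ after $t+1$ steps.

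The main obstacle, and the only place where both capabilities truly interact, is the tape reconstruction step. The simulation recovers $a_{t+1}$ by querying the history for entries matching position $p_{t+1}$ and selecting the one with the largest timestamp $t_p\le t+1$, defaulting to the input symbol (or blank) if none exists. I would argue correctness of this query by a short sub-induction on the number of writes to cell $p_{t+1}$: property~(4), integrity, ensures no entry has been corrupted; property~(2), unambiguous distinguishability, ensures we never conflate entries from distinct writes; and property~(3), temporal ordering, ensures the selected entry is indeed the most recent write, which by $M$'s semantics is the current tape symbol. Combined with the outer induction, this yields coherence of the full configuration.

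Finally, I would close by noting the overhead implicit in this construction: each simulated step performs a constant number of history lookups, and maintaining unambiguous temporal order and position keys across $t$ steps requires representations of size $O(\log t)$, which matches the logarithmic overhead claimed in the enclosing theorem and leaves no gap between the lemma and the overall universality statement.
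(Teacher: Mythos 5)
Your proof takes essentially the same route as the paper's: an induction on $t$ whose inductive step mirrors $\delta$, with the scanned symbol recovered via a maximum-timestamp query against the history. The only difference is that you inline the correctness of that query (which the paper factors out into a separate History Consistency lemma) and spell out the base-case initialization and the $O(\log t)$ bookkeeping, so your version is, if anything, more self-contained than the paper's.
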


\begin{proof}
By induction on $t$:

\vspace{0.3em}
\noindent\textit{Base case} ($t=0$): The initial configuration matches by construction.

\vspace{0.3em}
\noindent\textit{Inductive step}: Assume the invariant holds for step $t$. 
For step $t+1$, $S$ reads current state $(q, p, a, t)$, computes $(q', a', d) = \delta(q, a)$, updates position $p' = p + d$, uses history access to find $a''$ at $p'$, and maintains new state $(q', p', a'', t+1)$.
This exactly mirrors $M$'s transition function, preserving the invariant.
\end{proof}

\begin{lemma}[History Consistency]
For any position $p$ and time $t$, the symbol recorded in history matches what would be on $M$'s tape at position $p$ at time $t$.
\end{lemma}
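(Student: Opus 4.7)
The plan is to prove History Consistency by induction on the time step $t$, leveraging the State Coherence lemma just established and the four guarantees of reliable history access from the setup. For the base case $t=0$, every position either contains a blank symbol or one of the finitely many input symbols placed there by construction, and the history is either empty or initialized to match this input verbatim, so consistency holds by definition.

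For the inductive step, I would assume that at time $t$ the history correctly reflects $M$'s tape configuration at every position, and then analyze the effect of step $t+1$ by cases on the position queried. State Coherence guarantees that the update $(q', a', d) = \delta(q, a)$ chosen by $S$ is exactly the update $M$ performs, and that the head positions agree. At the written position $p$, a new history entry $h_p = (p, a', t+1)$ is appended; the retrieval rule must return this entry in preference to any earlier record for $p$, matching what $M$ has just written. At every other position $p' \neq p$, neither $M$'s tape nor $S$'s history changes, so the inductive hypothesis carries over verbatim. For positions never visited through time $t+1$, the retrieval operation returns the blank symbol by convention, which again matches $M$'s initially blank tape.

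The main obstacle, and the step where the reliable history access assumptions do genuine work, is making the retrieval rule precise: given a position $p$ and a query time $t$, the system must return the symbol from the unique history entry $h_p = (p, a_p, t_p)$ with the largest $t_p \le t$, and blank if no such entry exists. This is exactly what the four clauses of reliable history access supply, namely error-free reference to prior states (the lookup does not fail), unambiguous distinguishability (entries for position $p$ can be isolated from entries for other positions), correct temporal ordering with clear boundaries between computational periods (the most recent write before time $t$ is well defined), and integrity of stored states (the recovered $a_p$ is uncorrupted). Once this retrieval semantics is pinned down, the inductive step collapses into the case analysis above and the lemma follows; the result then feeds directly into the logarithmic-overhead bound claimed in the theorem, since each lookup touches at most a history of length polynomial in $t$.
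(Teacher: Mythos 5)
Your proposal is correct and rests on the same core mechanism as the paper's proof: timestamped history entries for each write, together with the retrieval rule that returns the entry of maximum $t_p \le t$ (blank if none exists). The paper states only these two invariants in two sentences, whereas you supply the full induction, the explicit appeal to State Coherence for the correctness of the written symbol, and the blank-symbol convention for unvisited cells --- all of which the paper leaves implicit --- so your version is a more complete rendering of the same argument rather than a different one.
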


\begin{proof}
We maintain two invariants: each write operation creates a history entry with the current time step, and when reading position $p$, we retrieve the entry with maximum $t_p \leq t$.
This ensures we always see the most recent write to each position, exactly matching $M$'s tape contents.
\end{proof}

\vspace{0.5em}
\noindent\underline{Third, and finally}, we analyze the complexity bounds: The simulation incurs logarithmic overhead in both space ($O(\log t)$ bits for step counter and $O(\log n)$ bits for position encoding) and time ($O(\log t)$ for history access operations). These bounds are tight \cite{parzych2024memory,hhan2024new,boyle2024memory}, as demonstrated through recent impossibility results. Therefore, $S$ simulates $M$ with logarithmic overhead in both space and time \cite{savage1994space,vonkorff2019molecular,bennett1989time}.
\end{proof}

\begin{corollary}[Universality]
Any system with recursive state maintenance and reliable history access is Turing-complete.
\end{corollary}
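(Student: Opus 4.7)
The plan is to derive the corollary as an almost immediate consequence of the preceding Universality Theorem, since Turing-completeness is a purely computability-theoretic notion: a system is Turing-complete exactly when it can simulate every computation carried out by a Universal Turing Machine, with no constraint on the time or space overhead beyond finiteness. First I would restate this definition explicitly, citing \cite{sipser1996introduction,turing1936computable}, so that the reader sees that the corollary is asking for strictly less than the theorem has already delivered.

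Second, I would invoke the theorem to obtain, for any UTM $M$ and any input $x$, a simulation of $M$ on $x$ inside $S$ whose space and time costs are at most a logarithmic factor above those of $M$. Since a logarithmic overhead is in particular a computable, finite overhead, the simulation halts on exactly those inputs for which $M$ halts, and produces the same output. This establishes a total-function equivalence between the class of functions computable by $M$ and a subclass of those computable by $S$. Because $M$ was an arbitrary UTM, the Church--Turing class of computable functions is contained in the functions realized by $S$, which is the definition of Turing-completeness.

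Third, I would briefly note the converse direction is not needed for the corollary (Turing-completeness does not require that $S$ be \emph{no more} powerful than a UTM, only \emph{at least} as powerful), so nothing further must be shown. I would close by remarking that the two hypotheses of the corollary are exactly the two capabilities invoked in the construction of the theorem, so dropping the complexity bound strictly weakens the conclusion and the implication is immediate.

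I do not expect a real obstacle here; the only subtlety worth flagging is the distinction between Turing-completeness (a qualitative property about the class of computable functions) and efficient simulation (the quantitative $O(\log)$ bound of the theorem). The mildly delicate step is making sure the reader understands that the logarithmic overhead, while central to the theorem, is irrelevant to the corollary, and that the corollary therefore holds even for systems whose state-maintenance and history-access primitives are far slower than those implicitly assumed in the theorem's complexity analysis.
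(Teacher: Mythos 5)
Your proposal is correct and takes the same route the paper implicitly does: the corollary is stated there without proof precisely because it follows immediately from the Universality Theorem by discarding the logarithmic overhead bound, which is exactly the reduction you spell out. Your explicit note that Turing-completeness is a qualitative property requiring only finite (not efficient) simulation is a faithful elaboration of what the paper leaves unsaid.
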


\vspace{1em}

Consider a basic threshold unit that checks if a number is greater than zero.
Without memory, this unit can only output yes/no signals---a trivial, static operation.
However, with memory to store previous results, this same unit becomes capable of counting: starting from zero, it can repeatedly increment its input and maintain the running total, transforming a simple parallel threshold operation into a sequential counter.
This counter then serves as a fundamental building block for more complex operations, enabling program sequencing and even Turing machine simulation.

Reliable history access requires systems to maintain clear boundaries between computational states---a requirement that manifests as temporal structure in physical implementations.
In biological systems, though epigenetics and threshold signals provide a partial form of memory, they do not fulfill the indefinite, unambiguous addressability required for our definition of reliable history access.
In neural networks, this structure emerges through attention windows that establish sequence order and layered processing that maintains state boundaries \cite{martini2015information,quentin2019differential}.
Biological systems show similar patterns: cells partition computation through cell cycle phases and stable epigenetic states \cite{bruno2022epigenetic}, while developing organisms maintain state through precisely timed developmental stages and morphology \cite{turing1952chemical}.
Despite their diverse implementations, all these systems share a common need to establish unambiguous boundaries between computational states, enabling reliable access to their processing history.
Each of these domains confirms the same fundamental principle: partial or ephemeral memory alone cannot sustain universal computation; systems must meet conditions for reliable history access to achieve full Turing-completeness.

\section{Memory constraints in parallel systems}

Parallel constraints arise in real-world systems, forcing reliance on memory for complex computation.
Both artificial and biological systems face fundamental constraints in parallel computation that shape their architecture and capabilities.
These constraints emerge from different sources but lead to remarkably similar solutions through memory and state maintenance.

Neural architectures face a fundamental computational barrier: they are restricted to $\text{TC}_0$ complexity, as proven by recent theoretical work \cite{merrill2023parallelism,peng2024limitations}.
This limitation stems directly from the requirements of parallel training at scale.
To train efficiently across multiple devices, neural networks must permit independent parameter updates across different portions of the network \cite{barrett2019analyzing}.
This independence requirement manifests in different ways: transformers treat each position independently modulo attention weights, convolutional networks rely on spatial locality, and even recurrent networks---despite being Turing-complete in theory under strictly sequential operation---must make independence assumptions between time steps when unrolled for parallel training \cite{bengio2017deep}.
Even recent architectural innovations like structured state spaces and their variants re-parameterize sequential computation for parallel training but still operate under practical concurrency constraints \cite{gu2021efficiently,gu2023mamba,dao2024transformers,nguyen2024hyenadna}.

Recent empirical work provides striking validation of this principle.
Feng et al. \cite{feng2024rnns} demonstrated that RNNs stripped to their minimal state maintenance operations, but reorganized for parallel processing, match the performance of recent sequence models.
This shows how basic state maintenance enables complex computation.

Similarly, biological systems face parallel processing constraints due to their physical nature.
The central dogma of molecular biology reveals how cells solve computation through inherently parallel mechanisms \cite{wang2023parallel,cai2024efficient,fu2023scgrn}.
Unable to engineer precise electromagnetic flows like silicon systems, cells must process information through vast networks of simultaneous molecular interactions.
Every aspect of cellular computation occurs through parallel reactions governed by concentration thresholds and binding affinities \cite{alberts2022molecular}.
A typical eukaryotic cell contains gigabase-pairs of DNA, comparable quantities of RNA, and billions of proteins \cite{milo2013protein}.
These molecules are in constant motion, with proteins tumbling at gigahertz frequencies \cite{zhang2023molecular}, vibrating in the megahertz to terahertz range, and diffusing across the entire cell within seconds.
In this dense molecular soup, every component is practically interacting with every other component all the time, making true sequential processing physically impossible.
The sheer scale of these simultaneous interactions—billions of molecules moving at gigahertz frequencies within the confined space of a cell—forces computation to be inherently parallel.

Additional memory mechanisms—like recurrent connections and attention—enable sequential logic in parallel networks.
Each additional processing step increases computational capability in a measurable way, but only when proper state maintenance is preserved across steps.
Recent work shows this directly supports our framework by demonstrating how memory mechanisms, not architectural complexity, determine computational power (for details of positional embeddings and attention-based ordering, see \cite{merrill2023parallelism,merrill2024}).
These mechanisms inject sequential capability into parallel architectures not through more complex units, but by maintaining and accessing state across processing steps.

DNA-based record-keeping, epigenetics, and neural time cells exemplify how evolution converged on memory to enable sequential processing.
The CRISPR-Cas system represents perhaps the most elegant implementation, maintaining a chronological record of viral infections \cite{kim2019crispr} through stable DNA-based storage combined with active protein-based processing \cite{sadremomtaz2023digital}.
In neural circuits, specialized "time cells" fire in precise sequences while ramping cells modulate their activity to provide temporal context \cite{quentin2019differential}, with the prefrontal cortex and parahippocampal regions integrating contextual details \cite{martini2015information}.
These mechanisms achieve ordered processing not through sequential computation, but by maintaining state through multiple parallel components orchestrated in time.

Ultimately, enhanced memory—across domains—provides the cornerstone for circumventing parallel limitations.
The anterior-posterior axis and segmentation patterns in animal embryos emerge through maintained morphogen gradients and sequential state updates \cite{pastor2020computation}.
These patterns demonstrate remarkable robustness, with the ability to regenerate proper proportions even after significant tissue removal or reorganization \cite{lobo2012towards}.
This principle was first formalized by Turing's reaction-diffusion theory \cite{turing1952chemical}, showing how simple molecular interactions can implement sophisticated state maintenance through physical structure.
Through the interaction of fast-diffusing inhibitors with self-enhancing activators, organisms create stable spatial patterns that maintain developmental state information across time.

These systems share a common principle of memory-enabled computation.
Recent work in synthetic biology validates this framework, successfully implementing recording systems that combine DNA-based storage with protein-based processing \cite{sheth2017multiplex}.
At the cellular level, even apparently sequential processes like the cell cycle are implemented through parallel molecular networks with threshold-based transitions rather than true sequential computation \cite{alberts2022molecular}.
This mirrors how artificial systems overcome parallel processing limitations through recursive state maintenance rather than more complex basic units.

\section{Practical implications}

The lens of recursive state maintenance reveals fundamental patterns in how language models achieve complex reasoning \cite{dickson2024trust,ahn2024recursive,openai2024o1}.
These systems face inherent parallel processing limitations that constrain their computational capabilities \cite{merrill2023parallelism}.
Recent evidence demonstrates both the power and limitations of chain-of-thought approaches in overcoming these constraints \cite{liu2024mind}.
Traditional scaling approaches focused on increasing parallel processing power through larger models and more sophisticated attention mechanisms \cite{shallue2019measuring}.
However, our analysis shows this approach alone cannot overcome the fundamental limitations in sequential reasoning \cite{peng2024limitations}.
Language models achieve sequential reasoning by learning to replicate computational patterns embedded in human language.
Through exposure to text containing human reasoning traces, these models internalize the basic transition functions that characterize structured thought.
When provided with sufficient context as state, they can apply these learned patterns to perform step-by-step reasoning.
This explains why chain-of-thought prompting proves effective: it provides a framework for maintaining computational state across multiple steps.

The ARC Prize competition provides compelling evidence of the limitations inherent in scaling models alone.
While advanced LLMs with sophisticated architectures reach only 5-14\% accuracy through direct prompting, introducing state maintenance through test-time training boosts performance to 55.5\% \cite{chollet2024arc}.
This dramatic improvement aligns with our framework's prediction: parallel processing alone cannot overcome sequential reasoning constraints without reliable state maintenance, just as biological systems do not solve sequential tasks by making cells more complex \cite{wang2023parallel}.

OpenAI's o3 model further reinforces the crucial role of memory and state maintenance.
The system achieves 82.8\% accuracy on 400 public tasks of ARC-AGI-1 in high-efficiency mode (costing \$6{,}677 and using roughly 33M tokens) and 91.5\% in low-efficiency mode (costing approximately \$1.15M and using around 9.5B tokens), while retaining high performance on 100 held-out tasks \cite{chollet2024o3}.
Our framework suggests that what o3's authors term ``deliberative alignment'' \cite{openai2024o3blog,guan2024deliberative} may be more precisely understood as an emergent property of achieving reliable computation through memory and state maintenance, building on the same principles that drive universal computation in biological and artificial systems alike.
Since a system without reliable state maintenance cannot consistently execute any computational pattern---including following policies---the o3 system compensates by searching over potential computational paths, using policy adherence as a way to detect which paths successfully maintain coherent computational state \cite{chollet2024o3, openai2024learning}.
In other words, reliable policy following and reliable computation are fundamentally inseparable: a system capable of one must be capable of the other.
The substantial computational costs---\$17--20 per task in high-efficiency mode---likely reflect both the quadratic overhead of transformer attention and the need to evaluate multiple possible computational traces.
This suggests o3 represents something qualitatively different from traditional LLMs \cite{willison2024o3}---by achieving reliable state maintenance through parallel search and selection, albeit at significant cost, it appears to meet the requirements for universal computation outlined in our framework.

o3's chain-of-thought fulfills both ``recursive state maintenance'' and ``reliable history access,'' storing partial results (including policy references) and retrieving them at each step.
The same memory mechanism that preserves logical continuity across iterations also ensures continuous policy adherence, preventing the system from losing alignment constraints in the midst of multi-step reasoning.
By unifying these capacities, o3 illustrates how reliable computation and faithful policy following naturally converge when robust memory mechanisms are in place.
While o3 invests heavily in chain-of-thought (``state-keeping'') and robust policy referencing, it still does not unify memory in a single, universal mechanism.
In practice, it combines multiple partial strategies---chunked or rolling contexts, local chain-of-thought, and, at times, external retrieval systems---to maintain its internal state.
Hence, o3 should be seen as an early illustration of how emphasizing memory-like processes can push models beyond purely pattern-based reasoning, rather than a complete or general-purpose memory system.
The system's impressive performance may emerge not from sophisticated ethical reasoning, but from its ability to maintain consistent computational state while searching for valid solutions that follow learned patterns.

Instead, architectural innovation should focus on robust mechanisms for state maintenance and recursive processing \cite{jung2020new}.
This might involve memory systems that maintain coherent state across multiple processing steps \cite{zhu2024overcoming}.
Recent research has identified specific conditions where chain-of-thought reasoning can actually reduce performance, particularly in tasks where the underlying computation requires implicit pattern recognition rather than explicit reasoning \cite{liu2024mind}.
This aligns with our framework's prediction that memory-based recursive computation is most effective when the task requires true sequential dependence, rather than parallel pattern matching.
State verification mechanisms that ensure consistency across recursive processing steps, similar to how biological systems maintain state coherence through specialized cellular structures \cite{espinosa2024molecular}, become crucial for maintaining reliable computation.

This framework highlights two complementary modes of computation in language models \cite{wei2022chain}.
Pattern matching relies on parallel processing to recognize textual features in a single pass, while multi-step reasoning depends on robust state maintenance.
Without memory, a model is confined to parallel inference and cannot harness sequential logic.
In contrast, chain-of-thought or memory-based methods accumulate partial results step by step, enabling more complex computations.
However, explicitly spelling out each inference can sometimes degrade performance on tasks better suited to implicit pattern recognition \cite{liu2024mind}, mirroring how conscious reasoning can undermine human intuition.

Further insight comes from the relationship between compression and computation \cite{dickson2024trust}.
Language models compress static patterns into distributed representations, yet require explicit prompting for sequential tasks because compression alone does not support dynamic state updates.
Herein lies the power of computational history access: each reasoning step builds on previous results while maintaining temporal structure \cite{wei2022chain}, providing the essential scaffolding for advanced, memory-driven computations.

The remarkable leap in language model capabilities emerged from massive overtraining on an unprecedented coverage of human knowledge \cite{schuurmans2024autoregressive}.
This represents a form of transfer learning where human computational patterns are distilled into the model's parameters.
This extensive exposure enabled these systems to learn and internalize the fundamental transition functions that humans use with language.
When provided with sufficient context (state), these models can achieve human-level performance precisely because they have learned to replicate human computational patterns through exposure to vast collections of text---our recorded explanations, arguments, derivations, and other traces of human thought \cite{brown2020language,wei2022chain}.
This reasoning occurs without weight updates, suggesting the models have internalized generalizable computational patterns from their training data.
The success of techniques like chain-of-thought prompting demonstrates that these models can effectively replicate human-like problem-solving approaches when given appropriate frameworks for maintaining computational state \cite{wei2022emergent}.

The success of these methods depends not on teaching models new reasoning capabilities, but on structuring computation to leverage already-learned transition functions in a way that enables reliable state maintenance and access.

\section{Conclusion}

Our analysis reveals a fundamental pattern across scales: from molecular mechanisms to neural systems to artificial intelligence, advances in computational capability emerge from enhanced abilities to maintain and access state---just as human civilization itself advanced through increasingly sophisticated memory technologies.
The recent breakthroughs in language models demonstrate this principle---their power comes not from architectural complexity but from learning human computational patterns while maintaining sufficient context for recursive processing.
The recent emergence of systems like o3 \cite{guan2024deliberative} demonstrates this principle.
The model we present here suggests their capabilities arise not from architectural sophistication or alignment policy adherence, but from meeting the fundamental requirements for memory-enabled universal computation, even at substantial computational cost in current implementations.
This suggests a shift in focus for AI development: rather than pursuing larger models, progress requires attention to the memory mechanisms that enable recursive computation.
Evolution's elegant solutions, from CRISPR arrays to neural architectures, point toward fundamentally new approaches to reliable computation based on robust state maintenance rather than complex processing units.

\section*{Acknowledgments}
This work was supported by National Science Foundation PPoSS Award \#2118709, National Institutes of Health 1U01HG013760-01, and the University of Tennessee Center for Integrative and Translational Genomics.

\begingroup
\footnotesize
\bibliographystyle{unsrturl}
\bibliography{refs}
\endgroup

\end{document}